\documentclass[10pt,conference]{IEEEtran}
\linespread{1}
\makeatletter
 \def\blfootnote{\xdef\@thefnmark{}\@footnotetext}
 \makeatother
\usepackage[inner=0.675in,outer=0.675in,top=0.7in,bottom=1in,pdftex]{geometry}
\usepackage{amsthm}
\usepackage{bbm}
\usepackage[justification=centering]{caption}
\usepackage[usenames]{color}
\usepackage[noadjust]{cite}
\ifCLASSINFOpdf
   \usepackage[pdftex]{graphicx}
   \usepackage{epstopdf}
\else
  \usepackage[dvips]{graphicx}
\fi
\usepackage[cmex10]{amsmath}
\usepackage[]{amssymb}
\usepackage{algorithmic}
\usepackage{algorithm}
\usepackage{mdwmath}
\usepackage{mdwtab}
\usepackage{subfigure}
\usepackage{url}
\usepackage{color}
\usepackage[verbose]{wrapfig}

\allowdisplaybreaks
\newtheorem{theorem}{Theorem}
\newtheorem{Definition}{Definition}

\newtheorem{Remark}{Remark}

\newtheorem*{Example*}{Example}

\begin{document}
\title{Distributed Byzantine Tolerant Stochastic Gradient Descent in the Era of Big Data}

\author{\IEEEauthorblockN{Richeng Jin}
\IEEEauthorblockA{\textit{Department of ECE} \\
\textit{North Carolina State University}\\
rjin2@ncsu.edu}
\and
\IEEEauthorblockN{Xiaofan He}
\IEEEauthorblockA{\textit{Electronic Information School} \\
\textit{Wuhan University}\\
xiaofanhe@whu.edu.cn}
\and
\IEEEauthorblockN{Huaiyu Dai}
\IEEEauthorblockA{\textit{Department of ECE} \\
\textit{North Carolina State University}\\
hdai@ncsu.edu}}

\maketitle
\begin{abstract}
The recent advances in sensor technologies and smart devices enable the collaborative collection of a sheer volume of data from multiple information sources. As a promising tool to efficiently extract useful information from such big data, machine learning has been pushed to the forefront and seen great success in a wide range of relevant areas such as computer vision, health care, and financial market analysis. To accommodate the large volume of data, there is a surge of interest in the design of distributed machine learning, among which stochastic gradient descent (SGD) is one of the mostly adopted methods. Nonetheless, distributed machine learning methods may be vulnerable to Byzantine attack, in which the adversary can deliberately share falsified information to disrupt the intended machine learning procedures. In this work, two asynchronous Byzantine tolerant SGD algorithms are proposed, in which the honest collaborative workers are assumed to store the model parameters derived from their own local data and use them as the ground truth. The proposed algorithms can deal with an arbitrary number of Byzantine attackers and are provably convergent. Simulation results based on a real-world dataset are presented to verify the theoretical results and demonstrate the effectiveness of the proposed algorithms.
\end{abstract}
{\blfootnote{This work was supported in part by the US National Science Foundation under Grants ECCS-1444009 and CNS-1824518.}}

\section{Introduction}
\noindent With the proliferation of sensors and smart devices, the past decade has witnessed the blowout growth in the size of the daily generated data. For example, according to \cite{dobre2014intelligent}, the world produces around 2.5 quintillion bytes of data per day in 2014. Also, as predicted by Cisco, there will be around 11.6 billion mobile devices by the year 2020 and a smartphone will generate 4.4 gigabytes data per month on average \cite{index2016cisco}. Facing such a data deluge, distributed machine learning is anticipated to play an essential role because of its ability to exploit the collective computation power of the local smart/sensing devices, thereby leading to enhanced big data analytics \cite{qiu2016survey}. Specifically, distributed machine learning mechanisms have several advantages over its centralized counterpart in big data related applications. Firstly, decentralization offers better scalability, and thus facilitates large-scale machine learning applications in practice. Secondly, it eliminates the burdensome process of moving the large amount of data from the distributed devices to a central unit \cite{jochems2016distributed} as well as the difficulty of storing the excessive amount of data in a single machine \cite{richtarik2016distributed}.

In the existing literature of distributed machine learning, the workers are usually assumed to be honest and perform perfectly well (i.e., not making any mistake in calculation and information transmission). However, in practice, some of the workers may share wrong information due to system malfunction or software bugs. Also, some of them may even be compromised by an adversary and deliberately share falsified information to mislead the other co-workers. As is shown in \cite{chen2017distributed} and \cite{blanchard2017machine}, even a single Byzantine worker can severely disrupt the convergence of distributed gradient descent algorithms. This problem becomes more critical in big data applications, since the large number of data collection devices (i.e., the workers in machine learning applications) and the sheer volume of the collected data make it extremely challenging, if not impossible, to ensure perfect trustworthiness in data sharing and processing.

There have been some recent works \cite{chen2017distributed,blanchard2017machine,xie2018generalized,yin2018byzantine,alistarh2018byzantine,damaskinos2018asynchronous,caodistributed} on Byzantine tolerant distributed machine learning algorithms, and most of them focus on stochastic gradient descent (SGD), which is one of the classic and widely adopted distributed machine learning algorithm with good scalability. However, most of them only consider the synchronous setting. This may lead to a waste of computation resources, since the workers with better computation capability have to wait for the other slower workers. In addition, most existing methods can deal with only a limited number of Byzantine workers. Moreover, they all assume a parameter server to coordinate the collaboration among the workers, which may be vulnerable to the single point of failure (SPOF).

To better accommodate the need of big data analytics, two asynchronous distributed Byzantine tolerant SGD algorithms that can deal with an arbitrary number of Byzantine workers are proposed in this work. Particularly, in the proposed algorithms, the workers are allowed to maintain their own local model parameters, which eliminates the need for a shared parameter server as in the existing literature. Also, in this setting, the workers do not need to wait for the latest broadcast model parameter from the parameter server and can proactively fetch the current learning results from the other (possibly Byzantine) co-workers at any time, thereby fulfilling asynchronous learning.\footnote{We note that the proposed algorithms indeed introduce some communication overhead. However, it can also be implemented with a parameter server which maintains and updates the local models for the workers. In this case, the communication overhead is similar to the existing methods in the literature mentioned above. Nonetheless, it may be vulnerable to SPOF.} The two proposed algorithms correspond to two different scenarios, respectively. In the first scenario, it is assumed that an \textit{upper-bound} $p$ of the number of Byzantine workers is known. To defend the Byzantine attack in this case, each worker takes an average of the $N-p$ model parameters that are closest to its own and then performs a gradient descent update step based on this average value. In the second scenario, no prior knowledge about the number of Byzantine workers is assumed. In this case, each worker first accepts the model parameters that potentially lead to lower empirical risk based on its evaluation over the local training samples. Then, the worker takes an average over the accepted model parameters and performs a gradient descent update step accordingly. Both of the proposed algorithms are provably convergent.

The remainder of this article is organized as follows. Section~\ref{preliminaries} reviews preliminaries and notations used in this work. The problem is formulated and presented in Section~\ref{ProblemFormulation}. The proposed algorithms are presented in Section~\ref{ProposedAlgorithms}. The effectiveness of the proposed algorithms is examined through simulations in Section~\ref{Numerical results}. Related works are discussed in Section \ref{Related Works}. Conclusions and future works are presented in Section~\ref{Conclusions and Future Works}.

\section{Preliminaries and Notations}\label{preliminaries}
\noindent In this section, we start by reviewing some important definitions. Suppose that there is a training data set $\mathcal{S}=\{(x_1,y_1),\cdots,(x_n,y_n)\}$ with $n$ training instances randomly sampled from a sample space $\mathcal{Z} = \mathcal{X} \times \mathcal{Y}$, where $\mathcal{X}$ is a space of feature vectors and $\mathcal{Y}$ is a label space. Let $\mathcal{W} \subseteq \mathbb{R}^{d}$ be a hypothesis space of the model parameter equipped with the standard inner product and 2-norm $||\cdot||$. Given a prediction model $h(w) \in \mathcal{F}: \mathcal{X} \rightarrow \mathcal{Y}$ which is parameterized by $w \in \mathcal{W}$, the goal is to learn a good model parameter $w$. The prediction accuracy is measured by a loss function $f: \mathcal{W} \times \mathcal{Z} \rightarrow \mathbb{R}$.Given a hypothesis $w \in \mathcal{W}$ and a training sample $(x_i,y_i) \in \mathcal{S}$, we have a loss $f(w,(x_i,y_i))$. SGD \cite{bottou2010large} is a commonly used optimization algorithm, which aims to minimize the empirical risk $F(w) = \frac{1}{n}\sum_{i=1}^{n}f(w,(x_i,y_i))$ over the training data set $\mathcal{S}$ of $n$ samples. For simplicity, let $f_{i}(w)=f(w,(x_i,y_i))$ for fixed $\mathcal{S}$. In each iteration, given a training sample $(x_t,y_t)$, SGD updates the hypothesis $w_t$ as follows:
\begin{equation}
w_{t+1} = G_{f_t,\eta_t} = w_{t} - \eta_{t}\nabla f_{t}(w_t),
\end{equation}
\noindent in which $\eta_{t}$ is the learning rate and $\nabla f_{t}(w_t) = \nabla f(w_t,(x_t,y_t))$ is the gradient.

To facilitate later discussion on convergence, some definitions related to the loss function are presented as follows.


\begin{Definition}
Let $g:\mathcal{W} \rightarrow \mathbb{R}$ be a function:
\begin{itemize}
\item $g$ is convex if for any $u,v \in \mathcal{W}$, \\
$g(u) \geq g(v)+<\nabla g(v),u-v>$
\item $g$ is L-Lipschitz if for any $u,v \in \mathcal{W}$, \\
$||g(u)-g(v)|| \leq L||u-v||$
\item $g$ is $\lambda-strongly$ convex if for any $u,v \in \mathcal{W}$, \\
$g(u)\geq g(v)+<\nabla g(v),u-v> + \frac{\lambda}{2}||u-v||^2$
\end{itemize}
\end{Definition}
\vspace{0.1in}
\section{Problem Formulation}\label{ProblemFormulation}
\noindent In this work, a network $\mathcal{N}=\{1,\cdots,N\}$ consisting of $N$ collaborative workers, each storing a portion of a dataset, is considered. It is assumed that each worker $i$ stores and updates its own local model parameter $w^{i}_{t}$ which will be used for their own classification tasks. In addition, asynchronous update is assumed in this work, where each worker can start its next update step immediately once the previous step finishes. Particularly, as is frequently done in the literature (e.g., \cite{boyd2006randomized,jin2016scale}), the time steps are modeled as the ticking of local clocks governed by Poisson processes. It is assumed that each worker has a clock that ticks with a rate 1 Poisson process. Thus, the inter-tick times at each worker are rate 1 exponentials, independent across workers and over time. In addition, there is a master clock which ticks whenever a local processor clock ticks and the time is discretized according to the master clock ticks (since these are the only time that the local models are updated). In this sense, the master clock ticks according to a rate $N$ Poisson process and the local clock $i$ that causes each master clock tick is an independently and identically distributed (i.i.d.) random variable drawn from $\mathcal{N}$. At iteration step $t$ (when there have been $t-1$ total update steps for all the workers), a worker $i$ sends requests and fetches the local model parameters from all the other workers (i.e., $w^{j}_{t}, \forall j \in \mathcal{N}/\{i\}$) and updates its local models based on the shared model parameters and its local dataset. In this work, it is assumed that up to $p$ workers are Byzantine which behave arbitrarily and can share any information. Furthermore, the Byzantine workers are assumed to be aware of the local model parameters from the honest workers since they can also send requests to them. Let $\mathcal{B}$ and $\mathcal{H}$ denote the sets of Byzantine workers and honest workers, respectively. After fetching, worker $i$ will receive

\begin{equation}
w^{j}_{t} =
\begin{cases}
\hfill w^{j}_{t}, \hfill &\text{if $j \in \mathcal{H}$},\\
\hfill \text{any $w \in \mathcal{W}$}, \hfill &\text{if $j \in \mathcal{B}$.}
\end{cases}
\end{equation}

The goal of this work is to design robust SGD algorithms that can tolerant any number of Byzantine attackers (i.e., $p$ can be any integer in $[0,N-1]$).
\vspace{0.1in}
\section{Proposed Algorithms}\label{ProposedAlgorithms}
\noindent In this section, the proposed algorithms are presented. In particular, depending on whether the upper bound $p$ of the number of Byzantine workers is known or not, two scenarios are considered. It is assumed that whenever a worker responds to a request and sends its model parameter to others, the shared information will arrive on time. In this case, all the Byzantine workers will choose to share something upon requests, since they can be easily identified if the others fail to receive information from them.

\subsection{Scenario 1: $p$ is known}
\noindent The main steps of the proposed algorithm are given in Algorithm \ref{algorithm1}. The main idea is that the local model parameter stored by worker $i$ (i.e., $w^{i}_{t}$) can serve as the ground truth, based on which the received shared model parameters (i.e., $w^{j}_{t}, \forall j \in \mathcal{N}/\{i\}$) can be filtered. Specifically, given the upper bound $p$ of the number of Byzantine workers, accepting the $N-p-1$ model parameters which are closest to an honest worker's own model parameter will intuitively help filter out the wrong information shared by the Byzantine workers.
\begin{algorithm}
\caption{Byzantine Tolerant SGD Algorithm when $p$ is known}
\label{algorithm1}
\begin{algorithmic}
\STATE 1. Initialization: total number of workers: $N$, number of training data samples for each node: $M$, upper-bound of the number of Byzantine workers: $p$, each honest worker $i \in \mathcal{H}$ randomly initialize its model parameter $w_{0}^{i}$.
\STATE 2. for iteration $t=0,1,\cdots,T$ do
\STATE 3. ~~ if worker $i$ causes the master clock to tick:
\STATE 4. ~~~~worker $i$ sends requests and fetches the model parameters from all the other workers, then it accepts the $N-p-1$ model parameters which are closest to its own (i.e., the $N-p-1$ $w^{j}_{t}$ with the smallest $||w^{i}_{t}-w^{j}_{t}||$). Then worker $i$ takes an average over the accepted model parameters, and randomly samples a mini-batch of training samples $\mathcal{S}_{t}^{i}$ from its local dataset and performs one gradient descent step as follows:
\begin{equation}\label{TakeAverage}
  w_{t+\frac{1}{2}}^{i} = \frac{w^{i}_{t}+\sum_{j \in \mathcal{A}_{t}^{i}}w^{j}_{t}}{N-p},
\end{equation}
\begin{equation}\label{PerformSGD}
  w_{t+1}^{i} = w_{t+\frac{1}{2}}^{i}-\eta_t\nabla f_{\mathcal{S}_{t}^{i}}(w_{t+\frac{1}{2}}^{i}),
\end{equation}
in which $w_{t+\frac{1}{2}}^{i}$ is the average of its own and the accepted model parameters for worker $i$, $\mathcal{A}_{t}^{i}$ is the set of accepted workers, $\nabla f_{\mathcal{S}_{t}^{i}}(w_{t+\frac{1}{2}}^{i})=\frac{1}{|\mathcal{S}_{t}^{i}|}\sum_{m \in \mathcal{S}_{t}^{i}}\nabla f_{m}(w_{t+\frac{1}{2}}^{i})$ is the average gradient and $\eta_t$ is the learning rate at time $t$.
\STATE 5. ~~~~worker $i$ normalizes its own model parameter, i.e.,
\begin{equation}\label{Normalization}
  w_{t+1}^{i} = \frac{w_{t+1}^{i}}{||w_{t+1}^{i}||}
\end{equation}
\STATE 6.~~end if
\STATE 7.end for
\end{algorithmic}
\end{algorithm}

Note that since each worker $i$ compares the received information with its own local model parameter by measuring the distance between the model parameters, naturally one possible attack against the proposed mechanism for the Byzantine workers is to add some random noise to the model parameter (i.e., $w^{i}_{t}$) of the worker that sends the request and then send the resulting perturbed result back to the requester. Note that in order to pass the accepting condition in Algorithm 1 (i.e., step 4), the Byzantine workers tend to modify the requester's model parameter moderately (otherwise it will be filtered out). Therefore, in the following analysis on Algorithm \ref{algorithm1}, it is assumed that for any $w^{j}_{t}$ from worker $j$ accepted by worker $i$ at time $t$, it satisfies $w^{j}_{t}=w^{i}_{t}+\epsilon$, with $\mathbb{E}[||\epsilon||]=0$ and $\mathbb{E}[||\epsilon||^2] \leq \sigma^2$, in which $\sigma$ is a positive real number. Nonetheless, we note that the proposed algorithm works well on other types of attacks too.

\begin{theorem}\label{theorem1}
Suppose that the loss function $f$ is $\lambda$-strongly convex. At each iteration $t$, assume that worker $i$ can sample a random gradient $\nabla f_{\mathcal{S}_{t}^{i}}(w_{t+\frac{1}{2}}^{i})$ that satisfies $\mathbb{E}[\nabla f_{\mathcal{S}_{t}^{i}}(w_{t+\frac{1}{2}}^{i})]=\mathbb{E}[\nabla F(w_{t+\frac{1}{2}}^{i})]$ and $||\nabla f_{\mathcal{S}_{t}^{i}}(w)||^2 \leq G^2$ for any $w$ and $\mathcal{S}_{t}^{i}$. Then running Algorithm \ref{algorithm1} with the time model as described, with a constant step size $\eta$, we have
\begin{equation}
\begin{split}
&\sum_{l=1}^{N-p}\mathbb{E}[||w_{t+1}^{l}-w^{*}||^2] \leq (1-\frac{2\eta\lambda}{N-p})^{t}\sum_{l=1}^{N-p}\mathbb{E}[||w_{0}^{l}-w^{*}||^2]\\
&+\frac{(1-2\eta\lambda)\sigma^2+\eta^2G^2}{2\eta\lambda}.
\end{split}
\end{equation}

\end{theorem}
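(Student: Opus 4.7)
My plan is to reduce the analysis to a single-step contraction of a standard strongly-convex SGD iterate, then handle the attack noise and the asynchronous clock by successive conditioning. The key simplification comes from the attack-noise assumption: every accepted neighbor of worker $i$ at time $t$ satisfies $w^j_t = w^i_t + \epsilon^{i,j}_t$ with (what I read as) $\mathbb{E}[\epsilon^{i,j}_t]=0$ and $\mathbb{E}[\|\epsilon^{i,j}_t\|^2]\le\sigma^2$, so line (3) rewrites as $w_{t+1/2}^i = w_t^i + \bar\epsilon_t^i$, where $\bar\epsilon_t^i:=\tfrac{1}{N-p}\sum_{j\in\mathcal{A}_t^i}\epsilon^{i,j}_t$ is conditionally zero-mean with $\mathbb{E}[\|\bar\epsilon_t^i\|^2]\le\sigma^2$. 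I would treat the normalization step (5) as a projection onto $\mathcal{W}$ that does not increase the distance to $w^*$, or note that the assumption implicitly restricts $w^*$ to the unit sphere.

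\textbf{Single-update contraction.} First I would condition on $w_{t+1/2}^i$, expand $\|w_{t+1}^i-w^*\|^2$, and take expectation over the mini-batch. Using the unbiasedness hypothesis, the cross term reduces to $-2\eta\langle w_{t+1/2}^i-w^*,\nabla F(w_{t+1/2}^i)\rangle$, and the second-moment bound $G^2$ handles the square term. Applying $\lambda$-strong convexity at $w^*$ (so that $\nabla F(w^*)=0$ and $\langle u-w^*,\nabla F(u)\rangle\ge\lambda\|u-w^*\|^2$) yields
\begin{equation*}
\mathbb{E}[\|w_{t+1}^i-w^*\|^2\mid w_{t+1/2}^i]\le(1-2\eta\lambda)\|w_{t+1/2}^i-w^*\|^2+\eta^2 G^2.
\end{equation*}
Next I take expectation over $\bar\epsilon_t^i$: the cross term in $\|w_t^i+\bar\epsilon_t^i-w^*\|^2$ vanishes because $\bar\epsilon_t^i$ is conditionally zero-mean, leaving $\mathbb{E}[\|w_{t+1/2}^i-w^*\|^2]\le\mathbb{E}[\|w_t^i-w^*\|^2]+\sigma^2$, and hence the per-update recursion
\begin{equation*}
\mathbb{E}[\|w_{t+1}^i-w^*\|^2]\le(1-2\eta\lambda)\,\mathbb{E}[\|w_t^i-w^*\|^2]+(1-2\eta\lambda)\sigma^2+\eta^2 G^2.
\end{equation*}

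\textbf{Averaging over the Poisson clock and unrolling.} In the asynchronous model, the worker that triggers each master-clock tick is uniform over the honest set, so each honest worker $l$ updates with probability $1/(N-p)$ and otherwise retains its previous iterate. Mixing the contraction above with the no-update case, and summing over the $N-p$ honest workers, yields a scalar recursion for $\Phi_t:=\sum_{l=1}^{N-p}\mathbb{E}[\|w_t^l-w^*\|^2]$ of the form $\Phi_{t+1}\le(1-\tfrac{2\eta\lambda}{N-p})\Phi_t+C$, where $C$ collects the $(1-2\eta\lambda)\sigma^2+\eta^2 G^2$ residual. Unrolling and bounding the geometric tail by $\sum_{k\ge0}(1-\alpha)^k=1/\alpha$ with $\alpha=2\eta\lambda/(N-p)$ gives the stated bound.

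\textbf{Anticipated obstacles.} The delicate step is the statistical coupling between the attack noise $\bar\epsilon_t^i$ and the stochastic gradient $\nabla f_{\mathcal{S}_t^i}(w_{t+1/2}^i)$, since the gradient is evaluated at the perturbed point; I resolve this by always conditioning on $w_{t+1/2}^i$ first so that the mini-batch expectation removes the gradient stochasticity, and only afterwards integrating over $\bar\epsilon_t^i$. A secondary concern is showing that the normalization step in line (5) is compatible with the contraction, which requires either $w^*$ lying on the unit sphere or an explicit nonexpansiveness argument for the projection. Finally, the bookkeeping for the Poisson clock must be done carefully, because only the worker that caused the tick changes state, yet the contraction factor in the theorem carries the $1/(N-p)$ dilution that reflects exactly this per-worker update rate.
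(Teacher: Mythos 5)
Your plan is essentially the paper's own proof: the per-tick decomposition with retention probability $\frac{N-p-1}{N-p}$, the conditional single-step contraction $(1-2\eta\lambda)\mathbb{E}[\|w_{t+\frac{1}{2}}^{l}-w^{*}\|^2]+\eta^2G^2$ from strong convexity and the $G^2$ bound, the zero-mean attack-noise argument giving $\mathbb{E}[\|w_{t+\frac{1}{2}}^{l}-w^{*}\|^2]\leq\mathbb{E}[\|w_{t}^{l}-w^{*}\|^2]+\sigma^2$, and the geometric unrolling; your explicit conditioning order (mini-batch first, then attack noise, then the clock) is exactly the right way to make the paper's implicit steps rigorous, and your concern about the normalization step is warranted since the paper's proof silently ignores it. One caveat on the last step: if you track the additive residual honestly, summing the per-worker constant over the $N-p$ honest workers cancels the $\frac{1}{N-p}$ update probability, so the recursion is $\Phi_{t+1}\leq(1-\frac{2\eta\lambda}{N-p})\Phi_t+\big[(1-2\eta\lambda)\sigma^2+\eta^2G^2\big]$ and the geometric tail yields a steady-state term $\frac{(N-p)\left[(1-2\eta\lambda)\sigma^2+\eta^2G^2\right]}{2\eta\lambda}$, i.e., $N-p$ times the constant stated in the theorem; the paper reaches the smaller constant only by an algebra step that treats $\frac{1}{N-p}\sum_{l=1}^{N-p}c$ as $\frac{c}{N-p}$ rather than $c$. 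So your unrolling will not "give the stated bound" as written---it gives the (correct) bound with the extra factor of $N-p$.
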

\begin{proof}
Without loss of generality, we assume that the first $N-p$ workers are honest while the last $p$ workers are Byzantine. Then we have
\begin{equation}\label{SumDistance}
\begin{split}
&\sum_{l=1}^{N-p}\mathbb{E}[||w_{t+1}^{l}-w^{*}||^2] = \frac{N-p-1}{N-p}\big[\sum_{l=1}^{N-p}\mathbb{E}[||w_{t}^{l}-w^{*}||^2]\big] + \\ &\frac{1}{N-p}\sum_{l=1}^{N-p}\big[\mathbb{E}[||w_{t+\frac{1}{2}}^{l}-\eta\nabla f_{\mathcal{S}_{t}^{l}}(w_{t+\frac{1}{2}}^{l})-w^{*}||^2]\big].
\end{split}
\end{equation}

For any $l\in \mathcal{H}$, we have

\begin{equation}
\begin{split}
&\mathbb{E}[||w_{t+\frac{1}{2}}^{l}-\eta\nabla f_{\mathcal{S}_{t}^{l}}(w_{t+\frac{1}{2}}^{l})-w^{*}||^2]\\
&= \mathbb{E}[||w_{t+\frac{1}{2}}^{l}-w^{*}||^2] + \eta^2\mathbb{E}[||\nabla f_{\mathcal{S}_{t}^{l}}(w_{t+\frac{1}{2}}^{l})||^2]\\
&- 2\eta\mathbb{E}[<w_{t+\frac{1}{2}}^{l}-w^{*},\nabla f_{\mathcal{S}_{t}^{l}}(w_{t+\frac{1}{2}}^{l})>].
\end{split}
\end{equation}
According to the strongly convexity of the loss function,

\begin{equation}
\begin{split}
&\mathbb{E}[<w_{t+\frac{1}{2}}^{l}-w^{*},\nabla f_{\mathcal{S}_{t}^{l}}(w_{t+\frac{1}{2}}^{l})>]\\
&= \mathbb{E}[<w_{t+\frac{1}{2}}^{l}-w^{*},\nabla F(w_{t+\frac{1}{2}}^{l})>] \geq \lambda \mathbb{E}[||w_{t+\frac{1}{2}}^{l}-w^{*}||^2].
\end{split}
\end{equation}
As a result,
\begin{equation}\label{10}
\begin{split}
&\mathbb{E}[||w_{t+\frac{1}{2}}^{l}-\eta\nabla f_{\mathcal{S}_{t}^{l}}(w_{t+\frac{1}{2}}^{l})-w^{*}||^2]\\
&\leq(1-2\eta\lambda)\mathbb{E}[||w_{t+\frac{1}{2}}^{l}-w^{*}||^2] + \eta^2\mathbb{E}[||\nabla f_{\mathcal{S}_{t}^{l}}(w_{t+\frac{1}{2}}^{l})||^2].\\
\end{split}
\end{equation}
Note that $w_{t+\frac{1}{2}}^{l} = \frac{w^{l}_{t}+\sum_{m \in \mathcal{A}_{t}^{l}}w^{m}_{t}}{N-p}$, we have
\begin{equation}\label{11}
\begin{split}
&\mathbb{E}[||w_{t+\frac{1}{2}}^{l}-w^{*}||^2] \leq \frac{1}{N-p}\big[\mathbb{E}[||w_{t}^{l}-w^{*}||^2] \\
&+\sum_{m \in \mathcal{A}_{t}^{l}}\mathbb{E}[||w_{t}^{m}-w^{*}||^2]\big] \leq \mathbb{E}[||w_{t}^{l}-w^{*}||^2] + \frac{N-p-1}{N-p}\sigma^2.
\end{split}
\end{equation}

Plugging (\ref{10}) and (\ref{11}) into (\ref{SumDistance}), we obtain

\begin{equation}
\begin{split}
&\sum_{l=1}^{N-p}\mathbb{E}[||w_{t+1}^{l}-w^{*}||^2] \leq \frac{N-p-1}{N-p}\big[\sum_{l=1}^{N-p}\mathbb{E}[||w_{t}^{l}-w^{*}||^2]\big] \\
&+\frac{1}{N-p}\sum_{l=1}^{N-p}\big[(1-2\eta\lambda)\mathbb{E}[||w_{t}^{l}-w^{*}||^2]\\
&+\frac{(1-2\eta\lambda)(N-p-1)}{N-p}\sigma^2+\eta^2G^2\big]\\
&\leq \sum_{l=1}^{N-p}(1-\frac{2\eta\lambda}{N-p})\mathbb{E}[||w_{t}^{l}-w^{*}||^2]+\frac{\eta^2G^2}{N-p}\\
&+\frac{(1-2\eta\lambda)}{(N-p)}\sigma^2.
\end{split}
\end{equation}

Therefore,
\begin{equation}
\begin{split}
&\sum_{l=1}^{N-p}\mathbb{E}[||w_{t+1}^{l}-w^{*}||^2] \leq (1-\frac{2\eta\lambda}{N-p})^{t}\sum_{l=1}^{N-p}\mathbb{E}[||w_{0}^{l}-w^{*}||^2]\\
&+\frac{(1-2\eta\lambda)\sigma^2+\eta^2G^2}{2\eta\lambda}.
\end{split}
\end{equation}
\end{proof}

\begin{Remark}
Theorem \ref{theorem1} indicates that the local model parameters converge to a ball around the optimal solution, whose radius is upper bounded by a variable depending on the noise added to the shared model parameters from the Byzantine workers.
\end{Remark}

\subsection{Scenario 2: $p$ is unknown}
\noindent Note that in practice, the upper bound of the number of Byzantine workers may not be available for the honest workers, an algorithm that does not require any prior knowledge about $p$ is developed in this subsection. The main steps of the proposed algorithm are given in Algorithm \ref{algorithm2}. The main difference with Algorithm \ref{algorithm1} is the conditions of accepting a shared model parameter. In this case, the filtering criteria in Algorithm \ref{algorithm1} cannot be used since $p$ in unknown. Therefore, (\ref{condition2}) is proposed to prevent the workers from accepting model parameters that are too far away from their local model parameters. However, the performance induced by condition (\ref{condition2}) depends on the threshold parameter $\delta$. If $\delta$ is too large, condition (\ref{condition2}) cannot filter out the Byzantine workers when the total number of iterations is limited. If $\delta$ is too small, all the legit workers may be filtered, which renders the collaboration ineffective. With such consideration, condition (\ref{condition1}) is proposed to further improve the performance of Algorithm 2, especially for large $\delta$. In particular, (\ref{condition1}) indicates that if worker $i$ performs a stochastic gradient update based on the shared model parameter $w_{t}^{j}$ from worker $j$, its own local parameter model $w_{t}^{i}$ is not in the direction of this update and therefore $w_{t}^{j}$ is supposed to be closer to the optimal $w^{*}$. In addition, (\ref{condition1}) is the sufficient condition for $\sum_{k \in \mathcal{S}_{t}^{i}}[f_{k}(w_{t}^{i})-f_{k}(w_{t}^{j})]\geq 0$ when the loss function $f$ is convex, which essentially means that the model parameter shared by worker $j$ is likely to be better than the local one.

The convergence of Algorithm \ref{algorithm2} is given as follows.
\begin{theorem}
Suppose that the loss function $f$ is $\lambda$-strongly convex with $L$-Lipschitz gradients. At each iteration $t$, assume that worker $i$ can sample a random gradient $\nabla f_{\mathcal{S}_{t}^{i}}(w_{t+\frac{1}{2}}^{i})$ that satisfies $\mathbb{E}[\nabla f_{\mathcal{S}_{t}^{i}}(w_{t+\frac{1}{2}}^{i})]=\mathbb{E}[\nabla F(w_{t+\frac{1}{2}}^{i})]$. Then running Algorithm \ref{algorithm2} with the time model as described, with a constant step size $0 \le \eta \leq \frac{2}{\lambda+L}$, we have
\begin{equation} \label{Contradiction}
\begin{split}
&\sum_{l=1}^{N-p}\mathbb{E}[||w_{t+1}^{l}-w^{*}||] \\
&\leq (1-\frac{\eta\lambda L}{(N-p)(\lambda+L)})^{t}\sum_{l=1}^{N-p}\mathbb{E}[||w_{0}^{l}-w^{*}||]\\
&+(1-\frac{\eta\lambda L}{\lambda+L})\frac{\delta}{(N-p)}\sum_{k=0}^{t}\frac{(1-\frac{\eta\lambda L}{(N-p)(\lambda+L)})^{(t-k)}}{k+1}.
\end{split}
\end{equation}
\end{theorem}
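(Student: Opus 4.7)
The plan is to follow the same template as the proof of Theorem \ref{theorem1}, but working with norms instead of squared norms and using the $L$-Lipschitz gradient assumption in place of the bounded gradient norm assumption, which is what opens up the stronger per-iteration contraction rate $1 - \eta\lambda L/(\lambda+L)$ appearing in the statement.

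First I would mirror the decomposition (\ref{SumDistance}) by conditioning on which honest worker causes the master clock to tick. For each honest $l$, with probability $(N-p-1)/(N-p)$ it does not tick and $w_{t+1}^{l}=w_{t}^{l}$, while with probability $1/(N-p)$ it performs the update (\ref{TakeAverage})--(\ref{PerformSGD}). This yields
\begin{equation*}
\sum_{l=1}^{N-p}\mathbb{E}[||w_{t+1}^{l}-w^{*}||] = \tfrac{N-p-1}{N-p}\sum_{l=1}^{N-p}\mathbb{E}[||w_{t}^{l}-w^{*}||] + \tfrac{1}{N-p}\sum_{l=1}^{N-p}\mathbb{E}[||w_{t+\frac{1}{2}}^{l}-\eta\nabla f_{\mathcal{S}_{t}^{l}}(w_{t+\frac{1}{2}}^{l})-w^{*}||].
\end{equation*}

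Second, I would invoke the classical contraction for one gradient step on a $\lambda$-strongly convex function with $L$-Lipschitz gradient: for $\eta\le 2/(\lambda+L)$,
\begin{equation*}
||w_{t+\frac{1}{2}}^{l}-\eta\nabla F(w_{t+\frac{1}{2}}^{l})-w^{*}||^{2}\le \Bigl(1-\tfrac{2\eta\lambda L}{\lambda+L}\Bigr)||w_{t+\frac{1}{2}}^{l}-w^{*}||^{2}.
\end{equation*}
Taking square roots and using $\sqrt{1-x}\le 1-x/2$ converts this into a linear (in-norm) contraction with rate $1-\eta\lambda L/(\lambda+L)$. The unbiasedness hypothesis on $\nabla f_{\mathcal{S}_{t}^{l}}$ then lets me replace $\nabla F$ by the stochastic gradient under expectation via Jensen.

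Third, I would bound $\mathbb{E}[||w_{t+\frac{1}{2}}^{l}-w^{*}||]$ using the two acceptance rules of Algorithm \ref{algorithm2}. Condition (\ref{condition2}) caps the deviation of every accepted $w_{t}^{j}$ from $w_{t}^{l}$ by $\delta$, so a triangle inequality on the convex combination defining $w_{t+\frac{1}{2}}^{l}$ produces a bound of the form $\mathbb{E}[||w_{t}^{l}-w^{*}||]+\delta_{\text{eff}}$. Condition (\ref{condition1}), which ensures that only parameters with smaller empirical risk on $\mathcal{S}_{t}^{l}$ are admitted, is what shrinks the effective perturbation budget as the iterations accumulate and honest workers' parameters cluster ever closer to $w^{*}$. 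Combining this with Step~2 gives a one-step recursion of the form $a_{t+1}\le (1-\tfrac{\eta\lambda L}{(N-p)(\lambda+L)})a_{t}+(1-\tfrac{\eta\lambda L}{\lambda+L})\tfrac{\delta}{(N-p)(t+1)}$, where $a_{t}=\sum_{l=1}^{N-p}\mathbb{E}[||w_{t}^{l}-w^{*}||]$. Unrolling this recursion produces exactly the geometric-sum expression in (\ref{Contradiction}).

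The part I expect to be the main obstacle is justifying the $1/(k+1)$ weighting inside the sum. A naive triangle-inequality argument using only (\ref{condition2}) would contribute a constant $\delta$ at every step, giving a geometric series that sums to $O(\delta)$ rather than the sharper expression in the statement. Producing the $1/(k+1)$ damping factor requires arguing — most plausibly by induction, leveraging (\ref{condition1}) and the convexity of $f$ as explained in the paragraph preceding the theorem — that the admissible perturbation effectively enters only through the newly-updated coordinate at each tick, so the contribution of Byzantine noise at iteration $k$ gets averaged against the growing number of honest updates accumulated so far. Getting this accounting correct, while staying consistent with the randomized tick model, is the delicate part of the proof.
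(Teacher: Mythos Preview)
Your first two steps match the paper's proof exactly: the same decomposition by conditioning on which worker ticks, and the same use of the co-coercivity inequality followed by $\sqrt{1-2x}\le 1-x$ to pass from squared norms to norms.

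The issue is in your third step. You have misread condition~(\ref{condition2}): it is not $||w_{t}^{i}-w_{t}^{j}||\le\delta$ but $||w_{t}^{i}-w_{t}^{j}||\le\delta/(t+1)$. The time-decaying threshold is built into the acceptance rule itself. Consequently the triangle inequality on the convex combination defining $w_{t+1/2}^{l}$ gives directly
\[
\mathbb{E}[||w_{t+\frac{1}{2}}^{l}-w^{*}||]\le \mathbb{E}[||w_{t}^{l}-w^{*}||]+\frac{\delta}{t+1},
\]
and the ``main obstacle'' you anticipate does not exist: the $1/(k+1)$ weight in the final sum is read off immediately from~(\ref{condition2}), and no induction or appeal to~(\ref{condition1}) is needed. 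In fact, the paper's proof never uses~(\ref{condition1}) at all; that condition is included in the algorithm for practical filtering performance (as discussed in Remark~2 and Section~\ref{deltachoice}), not for the convergence bound. Once you correct the reading of~(\ref{condition2}), your recursion $a_{t+1}\le(1-\tfrac{\eta\lambda L}{(N-p)(\lambda+L)})a_{t}+(1-\tfrac{\eta\lambda L}{\lambda+L})\tfrac{\delta}{(N-p)(t+1)}$ follows at once and unrolling it gives~(\ref{Contradiction}).
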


\begin{proof}
Without loss of generality, we assume that the first $N-p$ workers are honest while the last $p$ workers are Byzantine. Then, we have
\begin{equation}\label{SumDistance}
\begin{split}
&\sum_{l=1}^{N-p}\mathbb{E}[||w_{t+1}^{l}-w^{*}||] = \frac{N-p-1}{N-p}\big[\sum_{l=1}^{N-p}\mathbb{E}[||w_{t}^{l}-w^{*}||]\big] + \\ &\frac{1}{N-p}\sum_{l=1}^{N-p}\big[\mathbb{E}[||w_{t+\frac{1}{2}}^{l}-\eta\nabla f_{\mathcal{S}_{t}^{l}}(w_{t+\frac{1}{2}}^{l})-w^{*}||]\big].
\end{split}
\end{equation}

For any $l\in \mathcal{H}$, we have

\begin{equation}
\begin{split}
&\mathbb{E}[||w_{t+\frac{1}{2}}^{l}-\eta\nabla f_{\mathcal{S}_{t}^{l}}(w_{t+\frac{1}{2}}^{l})-w^{*}||^2]\\
&= \mathbb{E}[||w_{t+\frac{1}{2}}^{l}-w^{*}||^2] + \eta^2\mathbb{E}[||\nabla f_{\mathcal{S}_{t}^{l}}(w_{t+\frac{1}{2}}^{l})||^2]\\
&- 2\eta\mathbb{E}[<w_{t+\frac{1}{2}}^{l}-w^{*},\nabla f_{\mathcal{S}_{t}^{l}}(w_{t+\frac{1}{2}}^{l})>].
\end{split}
\end{equation}
According to the strongly convexity of the loss function,
\begin{equation}
\begin{split}
&\mathbb{E}[<w_{t+\frac{1}{2}}^{l}-w^{*},\nabla f_{\mathcal{S}_{t}^{l}}(w_{t+\frac{1}{2}}^{l})>]\\
& \geq \frac{\lambda L}{\lambda + L}\mathbb{E}[||w_{t+\frac{1}{2}}^{l}-w^{*}||^2] + \frac{1}{\lambda+L}||\nabla f_{\mathcal{S}_{t}^{l}}(w_{t+\frac{1}{2}}^{l})||^2.
\end{split}
\end{equation}
As a result,
\begin{equation}\label{lessthan0}
\begin{split}
&\mathbb{E}[||w_{t+\frac{1}{2}}^{l}-\eta\nabla f_{\mathcal{S}_{t}^{l}}(w_{t+\frac{1}{2}}^{l})-w^{*}||^2]\\
& \leq (1-\frac{2\eta\lambda L}{\lambda +L})\mathbb{E}[||w_{t+\frac{1}{2}}^{l}-w^{*}||^2] \\
&+ (\eta^2-\frac{2\eta}{\lambda+L})\mathbb{E}[||\nabla f_{\mathcal{S}_{t}^{l}}(w_{t+\frac{1}{2}}^{l})||^2].\\
\end{split}
\end{equation}

Note that when $\eta \leq \frac{2}{\lambda+L}$, the second term in (\ref{lessthan0}) is negative and therefore,
\begin{equation}
\begin{split}
&\mathbb{E}[||w_{t+\frac{1}{2}}^{l}-\eta\nabla f_{\mathcal{S}_{t}^{l}}(w_{t+\frac{1}{2}}^{l})-w^{*}||^2]\\
&\leq(1-\frac{2\eta\lambda L}{\lambda +L})\mathbb{E}[||w_{t+\frac{1}{2}}^{l}-w^{*}||^2].
\end{split}
\end{equation}
Since $\sqrt{1-2x} \leq 1-x$ when $1-2x > 0$, we have
\begin{equation}
\begin{split}
&\mathbb{E}[||w_{t+\frac{1}{2}}^{l}-\eta\nabla f_{\mathcal{S}_{t}^{l}}(w_{t+\frac{1}{2}}^{l})-w^{*}||]\\
&\leq(1-\frac{\eta\lambda L}{\lambda +L})\mathbb{E}[||w_{t+\frac{1}{2}}^{l}-w^{*}||].
\end{split}
\end{equation}
In addition, according to (\ref{condition2}),
\begin{equation}
\begin{split}
&\mathbb{E}[||w_{t+\frac{1}{2}}^{l}-w^{*}||]\\
&=\mathbb{E}[||\frac{w^{l}_{t}+\sum_{m \in \mathcal{A}_{t}^{l}}w^{m}_{t}}{|\mathcal{A}_{t}^{l}|+1}-w^{*}||] \\
&\leq \frac{1}{|\mathcal{A}_{t}^{l}|+1}\big[\mathbb{E}[||w^{l}_{t}-w^{*}||] + \sum_{m \in \mathcal{A}_{t}^{l}}\mathbb{E}[||w^{m}_{t}-w^{*}||]\big]\\
&\leq \mathbb{E}[||w^{l}_{t}-w^{*}||] + \frac{\delta}{t+1}.
\end{split}
\end{equation}
Then,
\begin{equation}
\begin{split}
&\sum_{l=1}^{N-p}\mathbb{E}[||w_{t+1}^{l}-w^{*}||] \\
&\leq \sum_{l=1}^{N-p}(1-\frac{\eta\lambda L}{(N-p)(\lambda+L)})\mathbb{E}[||w_{t}^{l}-w^{*}||]\\
&+(1-\frac{\eta\lambda L}{\lambda+L})\frac{\delta}{(N-p)(t+1)}.
\end{split}
\end{equation}
As a result,
\begin{equation}
\begin{split}
&\sum_{l=1}^{N-p}\mathbb{E}[||w_{t+1}^{l}-w^{*}||] \\
&\leq \sum_{l=1}^{N-p}(1-\frac{\eta\lambda L}{(N-p)(\lambda+L)})^{t}\mathbb{E}[||w_{0}^{l}-w^{*}||]\\
&+(1-\frac{\eta\lambda L}{\lambda+L})\frac{\delta}{(N-p)}\sum_{k=0}^{t}\frac{(1-\frac{\eta\lambda L}{(N-p)(\lambda+L)})^{(t-k)}}{k+1}.
\end{split}
\end{equation}
\end{proof}

\begin{algorithm}
\caption{Byzantine Tolerant SGD Algorithm when $p$ is unknown}
\label{algorithm2}
\begin{algorithmic}
\STATE 1. Initialization: total number of workers: $N$, number of training data samples for each node: $M$, upper bound of the number of Byzantine workers: $p$, each honest worker randomly initialize their model parameters $w_{t}^{i}$'s.
\STATE 2. for iteration $t=0,1,\cdots,T$ do
\STATE 3. ~~ if worker $i$ causes the master clock to tick:
\STATE 4. ~~~~worker $i$ sends requests and fetches the model parameters from all the other workers, and then accepts $w_t^{j}$ if it satisfies the following conditions:
\begin{equation}\label{condition2}
||w_{t}^{i}-w_{t}^{j}|| \leq \frac{\delta}{t+1}, \text{and}
\end{equation}
\begin{equation}\label{condition1}
<\nabla f_{\mathcal{S}_{t}^{i}}(w_{t}^{j}), w_{t}^{i}-w_{t}^{j}> \geq 0,
\end{equation}
in which $\nabla f_{\mathcal{S}_{t}^{i}}(w_{t}^{j})=\frac{1}{|\mathcal{S}_{t}^{i}|}\sum_{m \in \mathcal{S}_{t}^{i}}\nabla f_{m}(w_{t}^{j})$ is the average gradient corresponding to the shared model parameter $w_{t}^{j}$ and $\delta$ is subject to design and will be discussed in Section \ref{Numerical results}.
\STATE 5. ~~~~Then worker $i$ takes an average over the accepted model parameters, randomly samples a mini-batch of training samples $\mathcal{S}_{t}^{i}$ from its local dataset and performs one gradient descent step as follows:
\begin{equation}\label{TakeAverage}
  w_{t+\frac{1}{2}}^{i} = \frac{w^{i}_{t}+\sum_{j \in \mathcal{A}_{t}^{i}}w^{j}_{t}}{|\mathcal{A}_{t}^{i}|+1},
\end{equation}
\begin{equation}\label{PerformSGD}
  w_{t+1}^{i} = w_{t+\frac{1}{2}}^{i}-\eta_t\nabla f_{\mathcal{S}_{t}^{i}}(w_{t+\frac{1}{2}}^{i}),
\end{equation}
in which $\mathcal{A}_{t}^{i}$ is the set of accepted workers, $\nabla f_{\mathcal{S}_{t}^{i}}(w_{t+\frac{1}{2}}^{i})=\frac{1}{|\mathcal{S}_{t}^{i}|}\sum_{m \in \mathcal{S}_{t}^{i}}\nabla f_{m}(w_{t+\frac{1}{2}}^{i})$ is the average gradient and $\eta_t$ is the learning rate at time $t$.
\STATE 5. ~~~~worker $i$ normalizes its own model parameter, i.e.,
\begin{equation}\label{Normalization}
  w_{t+1}^{i} = \frac{w_{t+1}^{i}}{||w_{t+1}^{i}||}.
\end{equation}
\STATE 6.~~end if
\STATE 7.end for
\end{algorithmic}
\end{algorithm}

\begin{Remark}
According to (\ref{Contradiction}), the convergence of the local model parameters of the honest workers is immediate. In particular, the term induced by the bound given in (\ref{condition2}) decreases as the number of iterations increases and will finally vanish to 0. In addition, we note that although condition (\ref{condition2}) can guarantee the convergence of Algorithm \ref{algorithm2}, an appropriate $\delta$ should be determined for good performance. However, the choice of $\delta$ may depend on not only the specific dataset, but also the number of workers, which makes it hard to find a suitable $\delta$ in practice. This problem is solved by condition (\ref{condition1}), which preserves good performance even when we set $\delta$ arbitrarily large.
\end{Remark}

\begin{figure*}[h]
\begin{minipage}[t]{0.33\linewidth}
\centering
\includegraphics[width=1\textwidth]{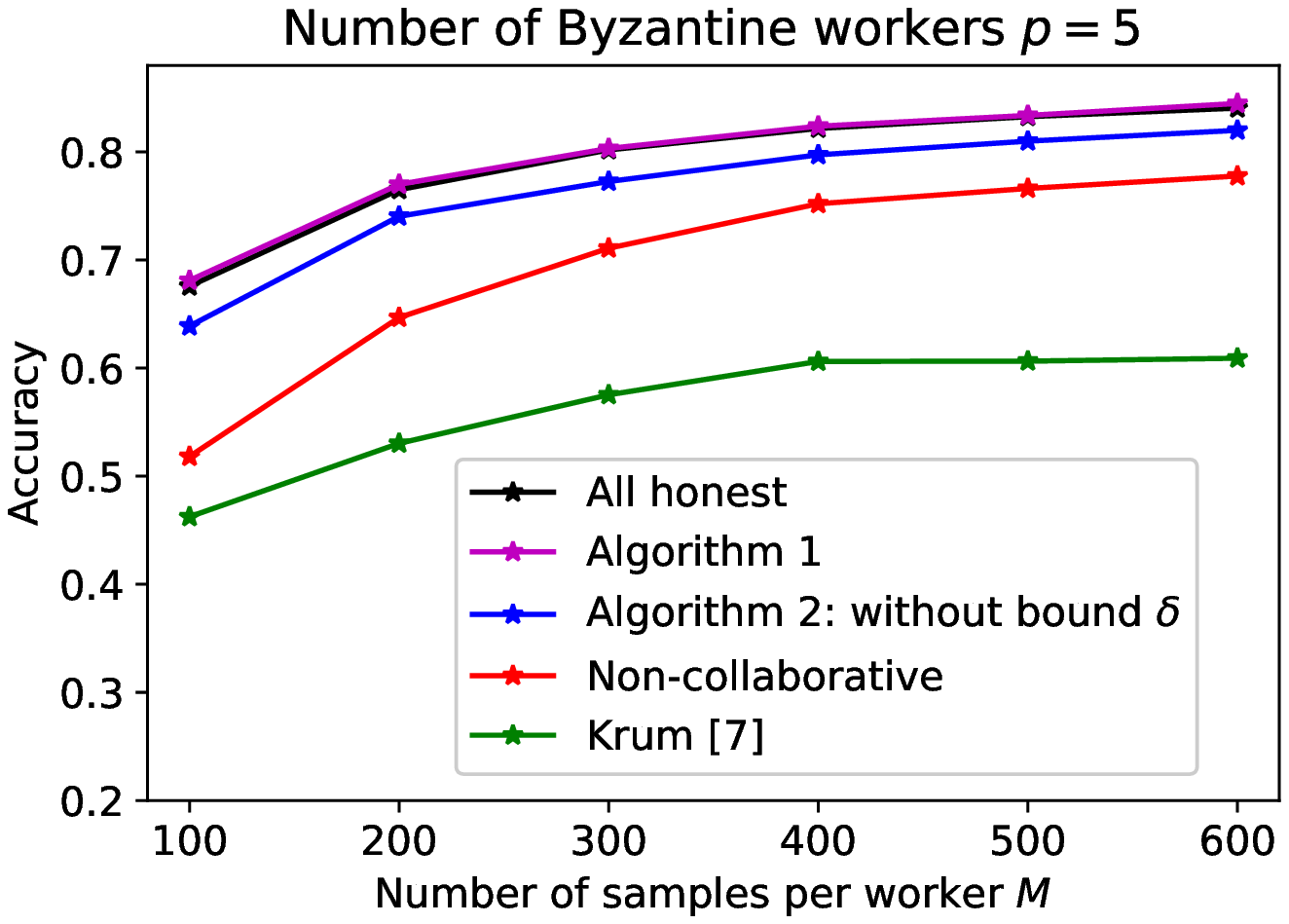}\vspace{-0.1in}
\caption{\footnotesize{The performance against ``Add noise" attack}}\vspace{-0.2in}
\label{addnoisep5}
\end{minipage}
\begin{minipage}[t]{0.33\linewidth}
\centering
\includegraphics[width=1\textwidth]{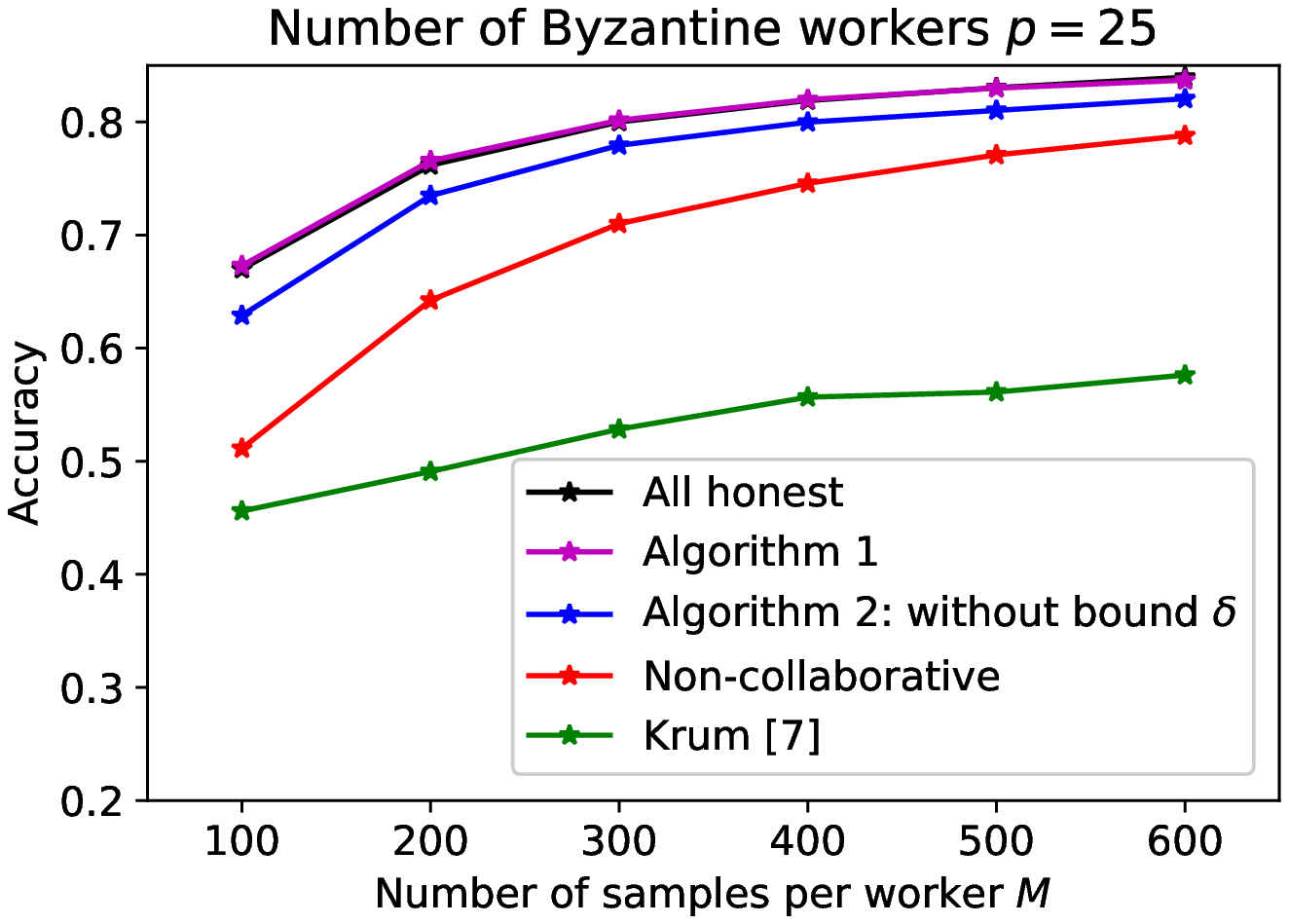}\vspace{-0.1in}
\caption{\footnotesize{The performance against ``Add noise" attack}}\vspace{-0.2in}
\label{addnoisep25}
\end{minipage}
\begin{minipage}[t]{0.33\linewidth}
\centering
\includegraphics[width=1\textwidth]{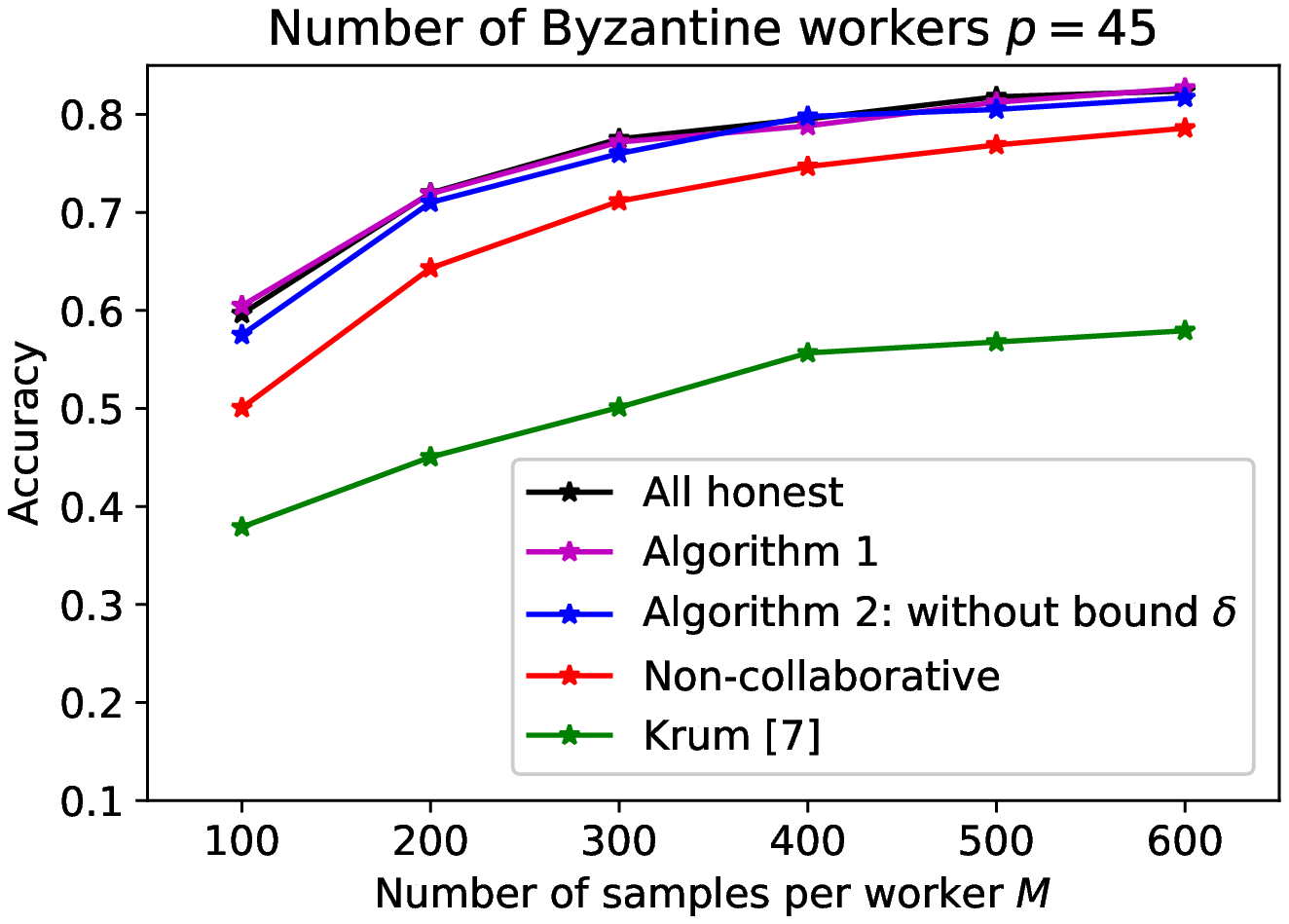}\vspace{-0.1in}
\caption{\footnotesize{The performance against ``Add noise" attack}}\vspace{-0.2in}
\label{addnoisep45}
\end{minipage}
\end{figure*}

\begin{figure*}[h]
\begin{minipage}[t]{0.33\linewidth}
\centering
\includegraphics[width=1\textwidth]{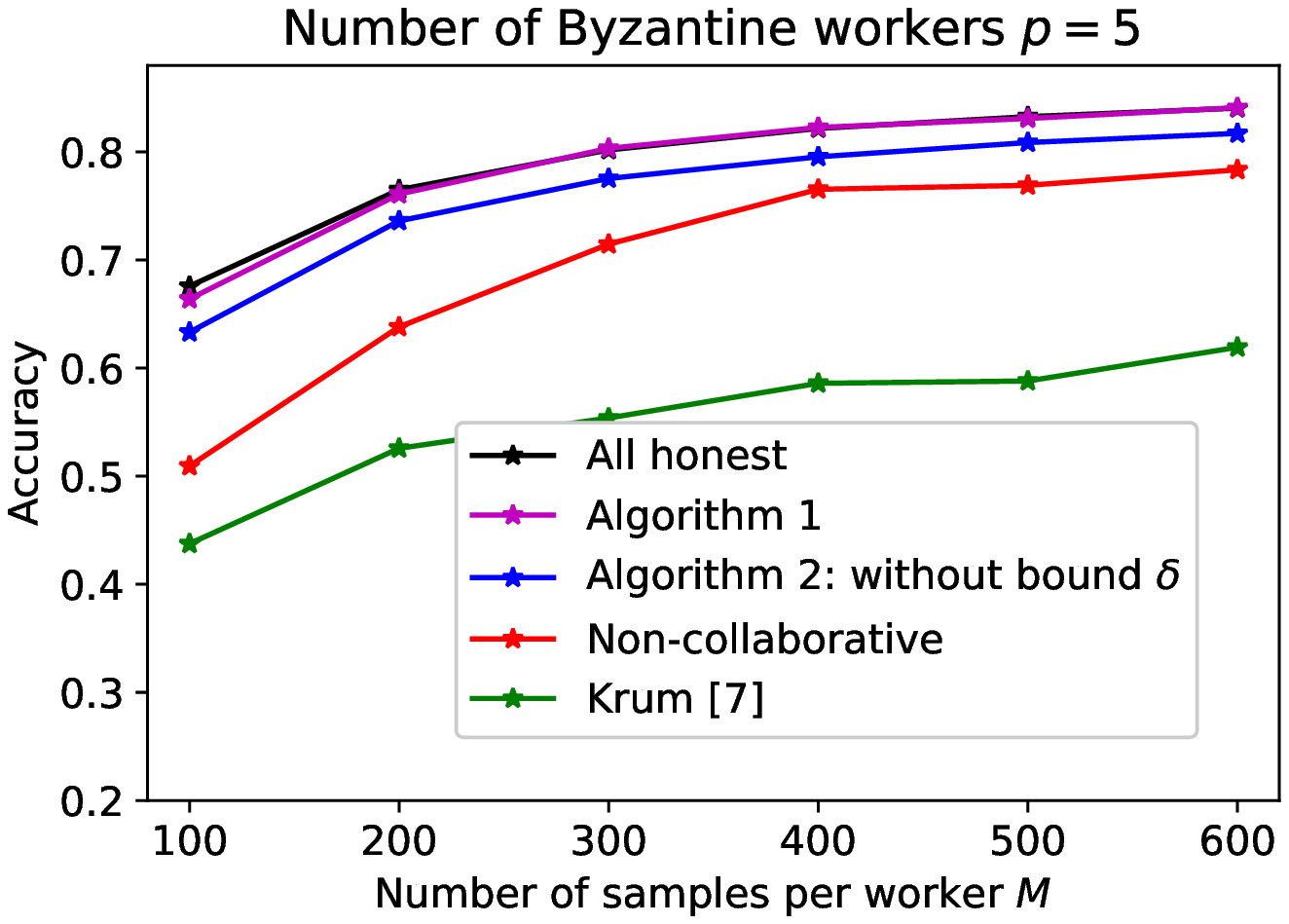}\vspace{-0.1in}
\caption{\footnotesize{The performance against ``Random" attack}}\vspace{-0.2in}
\label{Randomp5}
\end{minipage}
\begin{minipage}[t]{0.33\linewidth}
\centering
\includegraphics[width=1\textwidth]{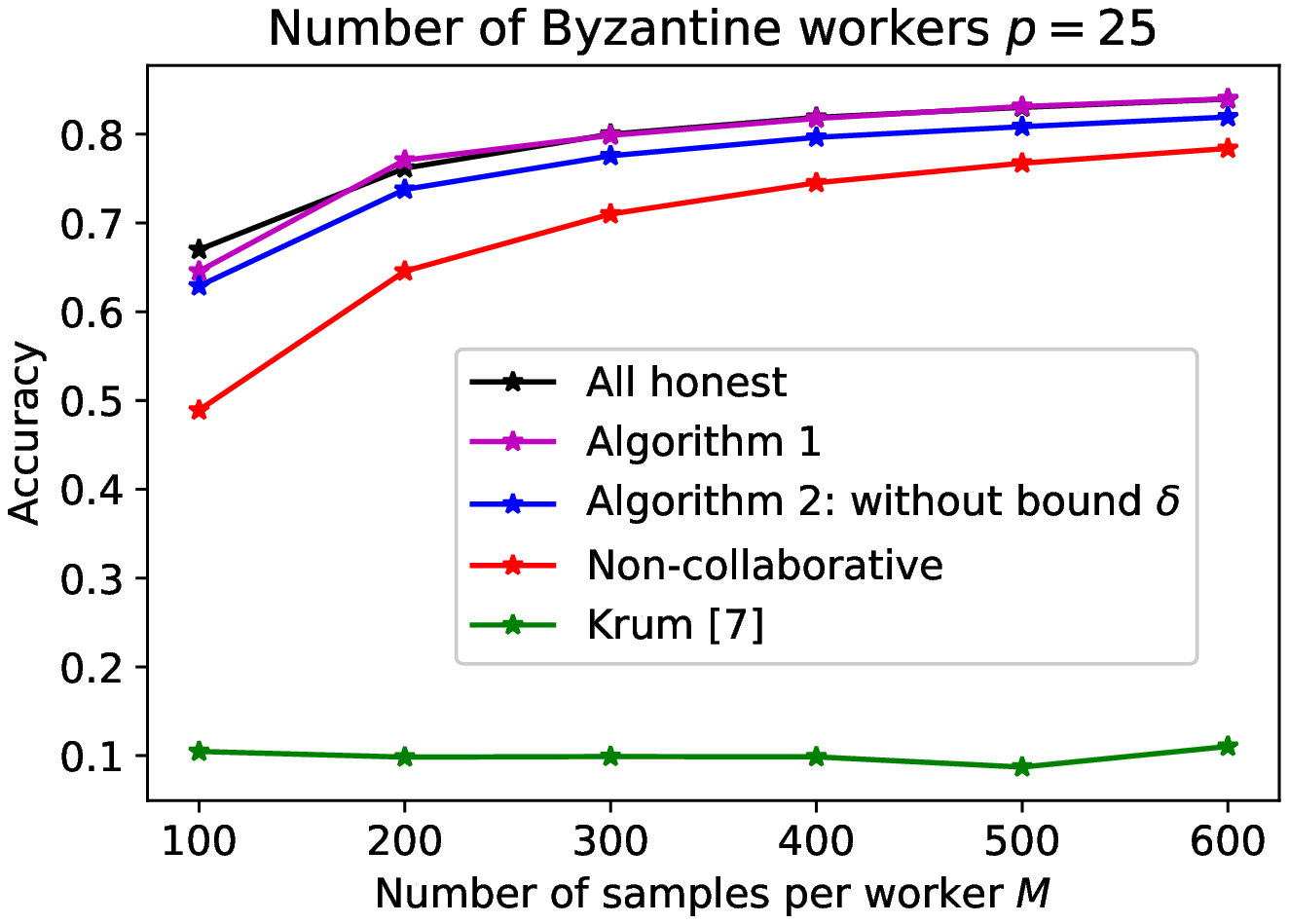}\vspace{-0.1in}
\caption{\footnotesize{The performance against ``Random" attack}}\vspace{-0.2in}
\label{Randomp25}
\end{minipage}
\begin{minipage}[t]{0.33\linewidth}
\centering
\includegraphics[width=1\textwidth]{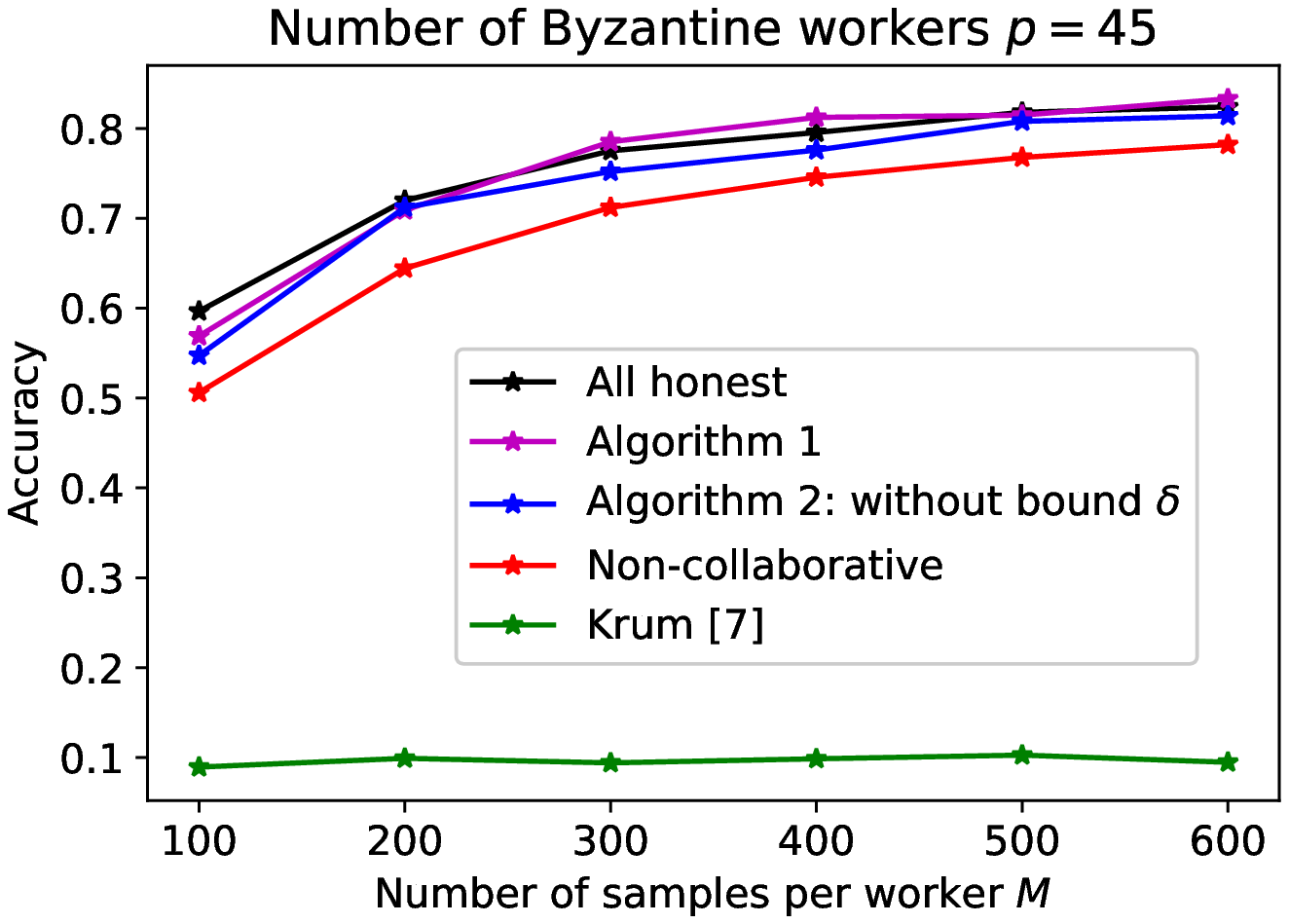}\vspace{-0.1in}
\caption{\footnotesize{The performance against ``Random" attack}}\vspace{-0.2in}
\label{Randomp45}
\end{minipage}
\end{figure*}

\section{Simulation Results}\label{Numerical results}
\noindent In this section, we present the simulation results to demonstrate the effectiveness of the proposed algorithms. In particular, the real public dataset MNIST \cite{lecun2010mnist} is used. MNIST is a widely used computer vision dataset which consists of 70,000 $28\times28$ pixel images of handwritten digits from 0 to 9. The dataset is divided into a training subset of size 60,000 and a testing subset of size 10,000. It is assumed that there are 50 workers in total (i.e., $N$=50) and all of them randomly select $M$ training samples from the training dataset and test their local model parameters using the testing subset after training. It is assumed that every worker builds a softmax regression model locally and runs one epoch (and therefore a larger local dataset results in more training iterations). During the training process, the local clocks of the workers are governed by the asynchronous model discussed in Section \ref{ProblemFormulation}. In addition, we assumed that the Byzantine workers train their own local model parameters independently and send requests to others but never use the information from others when they perform gradient descent steps.

To evaluate the effectiveness of the proposed algorithms, the average accuracy of the final local model parameters of all the legit workers is examined and compared with three baseline mechanisms. In the ``Non-collaborative" case, the honest workers independently train their local model parameters and do not collaborate at all; in the ``All honest" case, all the workers are supposed to be honest; for the baseline ``Krum", we implement the algorithm proposed in \cite{blanchard2017machine}. In addition, three types of attacks are considered. In the ``Add noise" attack, the Byzantine attackers add a random Gaussian noise with zero mean and a variance of 0.1 to the local model parameters of the workers that send requests; in the ``Random" attack, the Byzantine attackers generate and share a random vector with each element drawn from a uniform distribution in [0,1]; in the ``Inverse" attack, the Byzantine attackers share the opposite value of their own local model parameters.

\subsection{The Performance of the Proposed Algorithms against Different Attacks}
\noindent In this subsection, the performance of the proposed algorithms against different attacks is examined. In particular, in the ``All honest" case, it is assumed that the total number of honest workers is the same as the other examined mechanisms (i.e., if $p=5$, then there are $N-p=45$ workers). For the implementation of Algorithm \ref{algorithm2}, we present the results ignoring (\ref{condition2}) and the choice of $\delta$ will be discussed in Section \ref{deltachoice}. It can be observed from Figs. \ref{addnoisep5}-\ref{addnoisep45} that both Algorithm \ref{algorithm1} and Algorithm \ref{algorithm2} perform better than the ``Non-collaborative" case and the ``Krum" counterpart against the "Add noise" attack. In particular, ``Krum" performs even worse than the ``Non-collaborative" case since it only utilizes one of the gradients shared by all the workers and therefore discards useful information from most of the legit workers. In addition, for Algorithm \ref{algorithm1}, it is assumed that the exact number of Byzantine workers is known, and therefore it achieves almost the same performance as that in the ``All honest" counterpart, which can be considered as the optimal case. When $p$ is small, Fig. \ref{addnoisep5} and Fig. \ref{addnoisep25} show that Algorithm \ref{algorithm2} is about 2\% worse than Algorithm \ref{algorithm1} in terms of testing accuracy. This is because the condition given in (\ref{condition1}) may filter out some useful information from the honest workers. However, when the number of Byzantine workers is large (i.e., $p=45$ in the simulation), the performance of Algorithm \ref{algorithm2} is comparable to that of Algorithm \ref{algorithm1} and the ``All honest" counterpart.

Figs. \ref{Randomp5}-\ref{Randomp45} and Figs. \ref{Inversep5}-\ref{Inversep45} show the performance of the proposed algorithms against the ``Random" attack and the ``Inverse" attack, respectively. It can be observed the proposed algorithms outperform the ``Non-collaborative" and ``Krum" counterparts, which further verifies their effectiveness.

\begin{figure*}[h]
\begin{minipage}[t]{0.33\linewidth}
\centering
\includegraphics[width=1\textwidth]{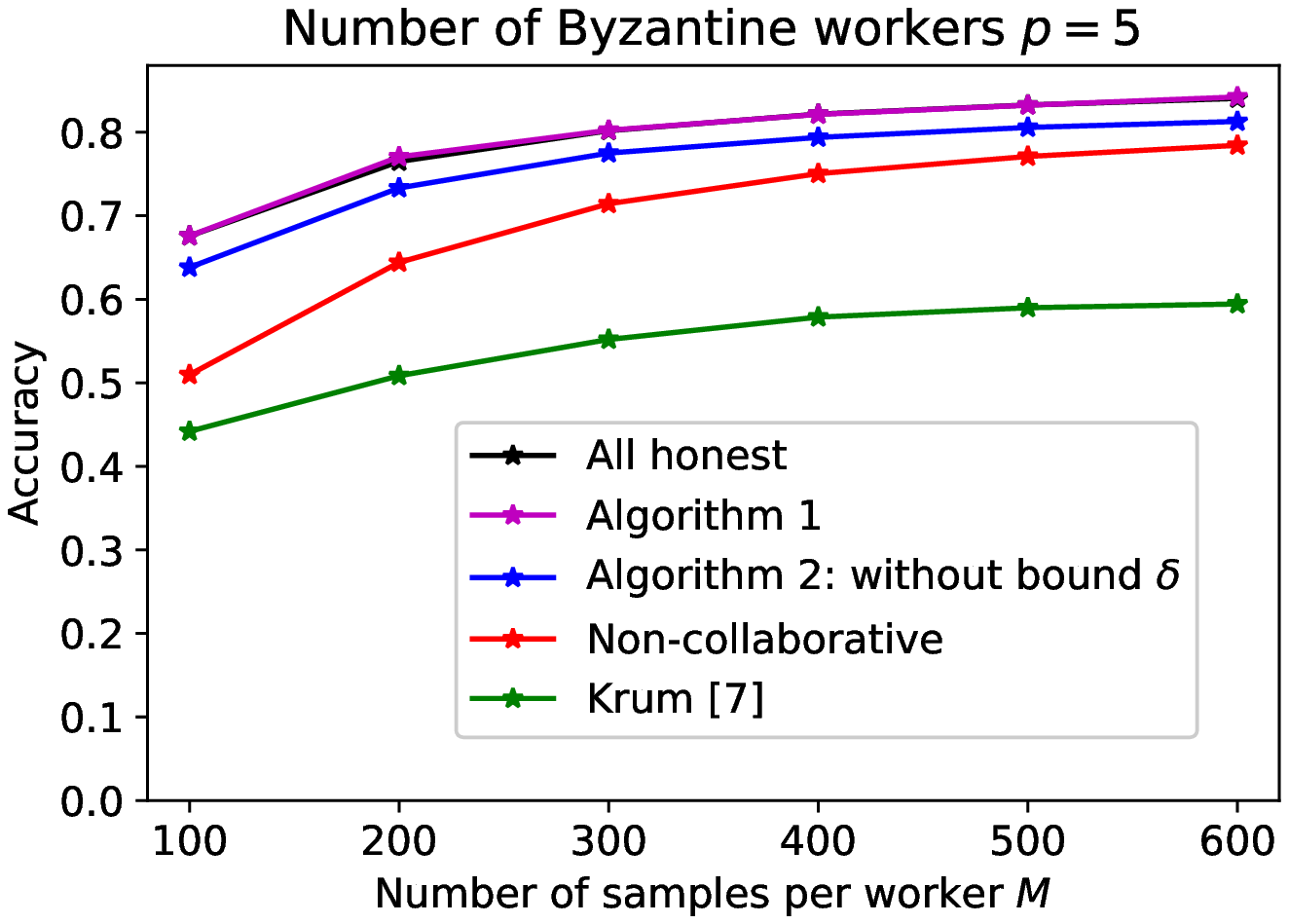}\vspace{-0.1in}
\caption{\footnotesize{The performance against ``Inverse" attack}}\vspace{-0.2in}
\label{Inversep5}
\end{minipage}
\begin{minipage}[t]{0.33\linewidth}
\centering
\includegraphics[width=1\textwidth]{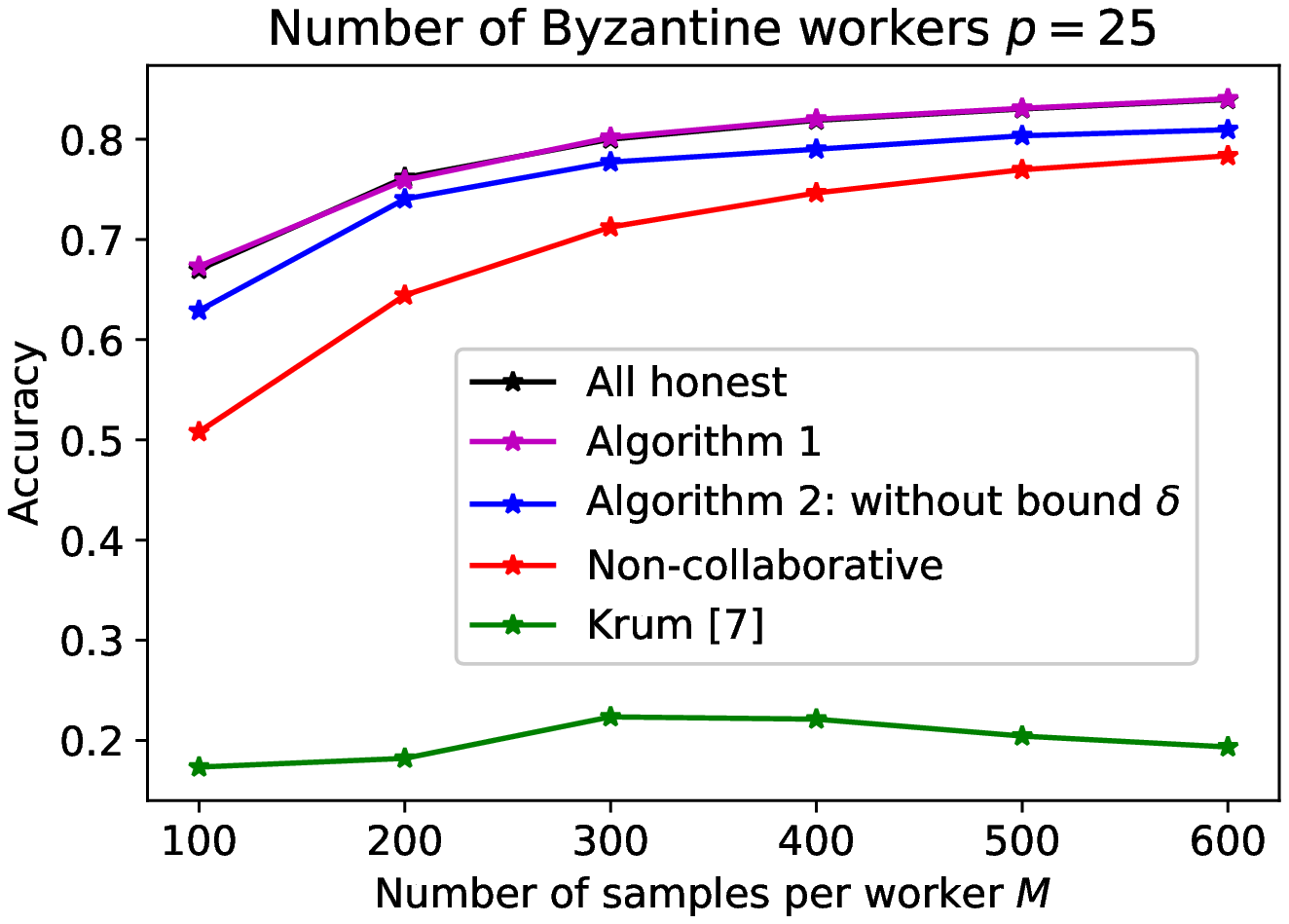}\vspace{-0.1in}
\caption{\footnotesize{The performance against ``Inverse" attack}}\vspace{-0.2in}
\label{Inversep25}
\end{minipage}
\begin{minipage}[t]{0.33\linewidth}
\centering
\includegraphics[width=1\textwidth]{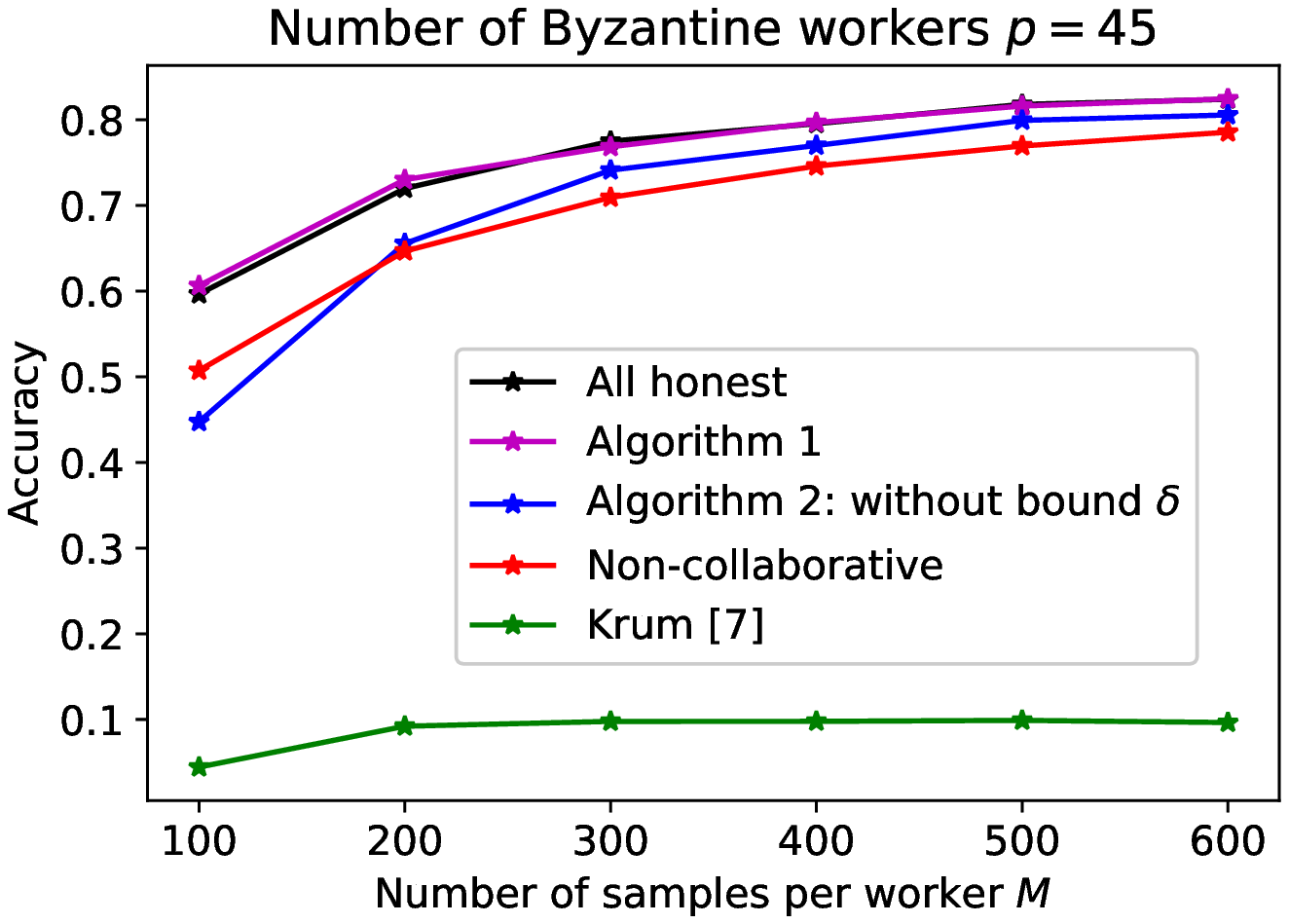}\vspace{-0.1in}
\caption{\footnotesize{The performance against ``Inverse" attack}}\vspace{-0.2in}
\label{Inversep45}
\end{minipage}
\end{figure*}
\begin{figure*}[h]
\begin{minipage}[t]{0.33\linewidth}
\centering
\includegraphics[width=1\textwidth]{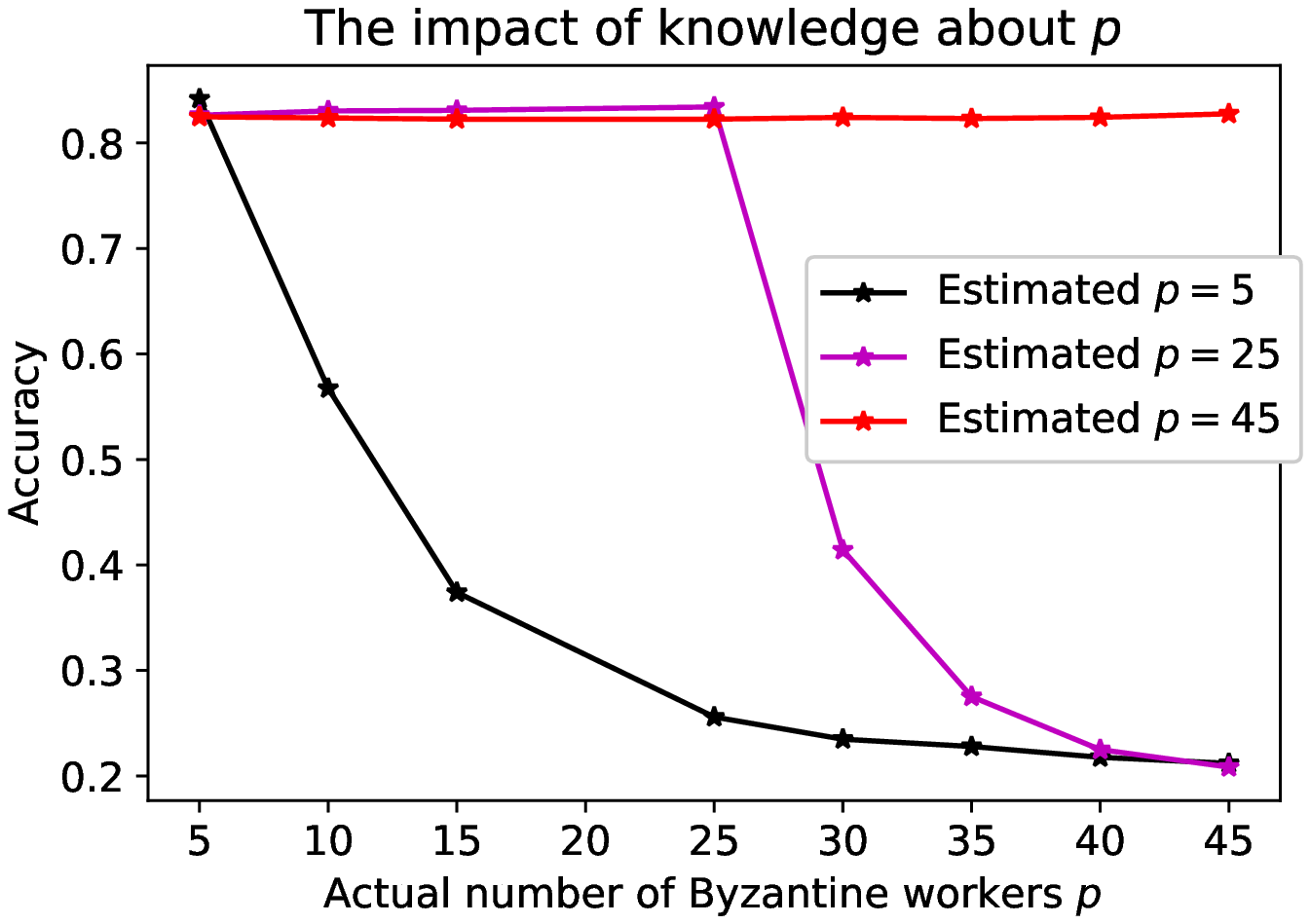}\vspace{-0.1in}
\caption{\footnotesize{The impact of knowledge about $p$ on Algorithm 1}}\vspace{-0.2in}
\label{addnoisep}
\end{minipage}
\begin{minipage}[t]{0.33\linewidth}
\centering
\includegraphics[width=1\textwidth]{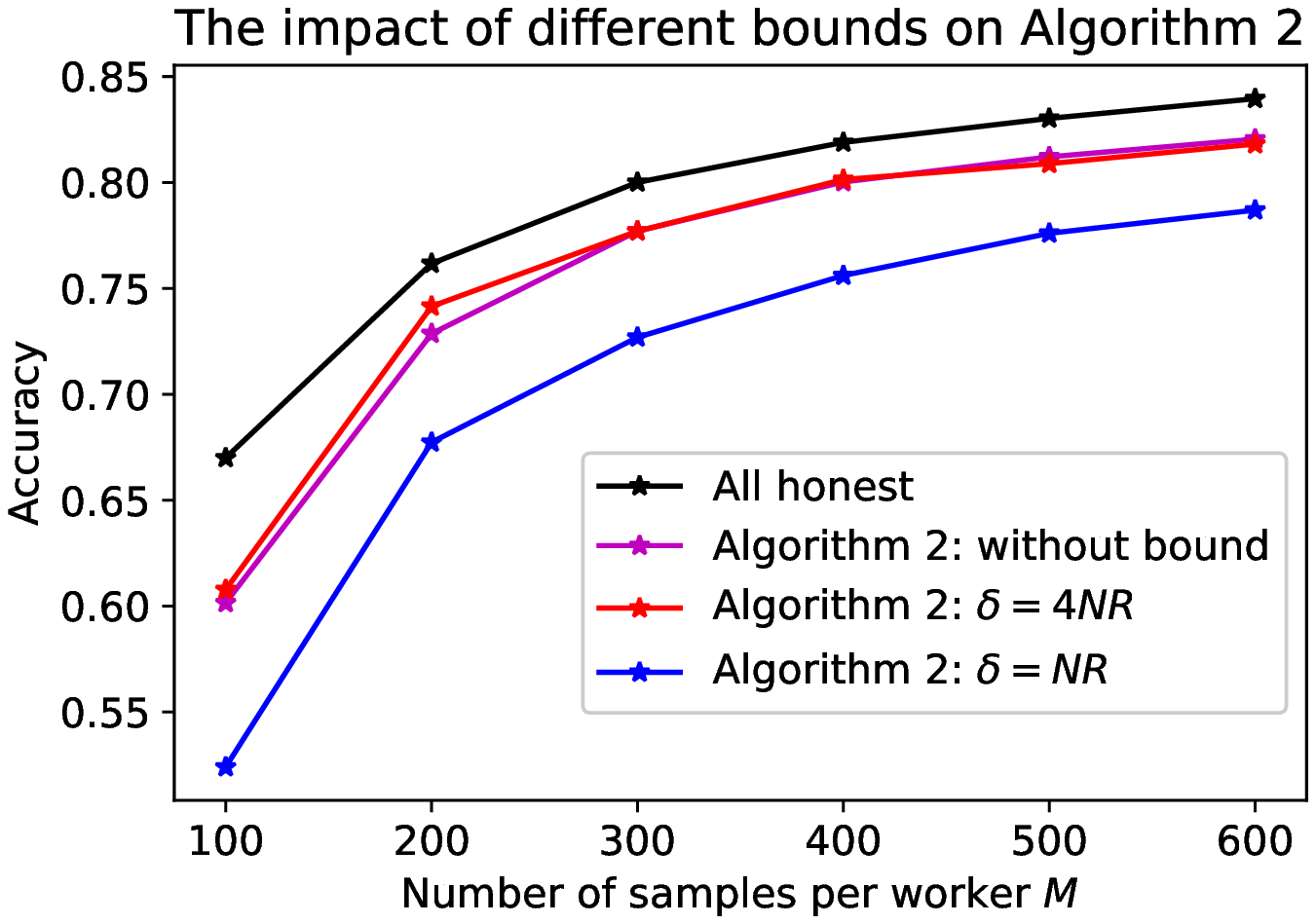}\vspace{-0.1in}
\caption{\footnotesize{The impact of the bound $\delta$ on Algorithm 2}}\vspace{-0.2in}
\label{addnoisedelta1}
\end{minipage}
\begin{minipage}[t]{0.33\linewidth}
\centering
\includegraphics[width=1\textwidth]{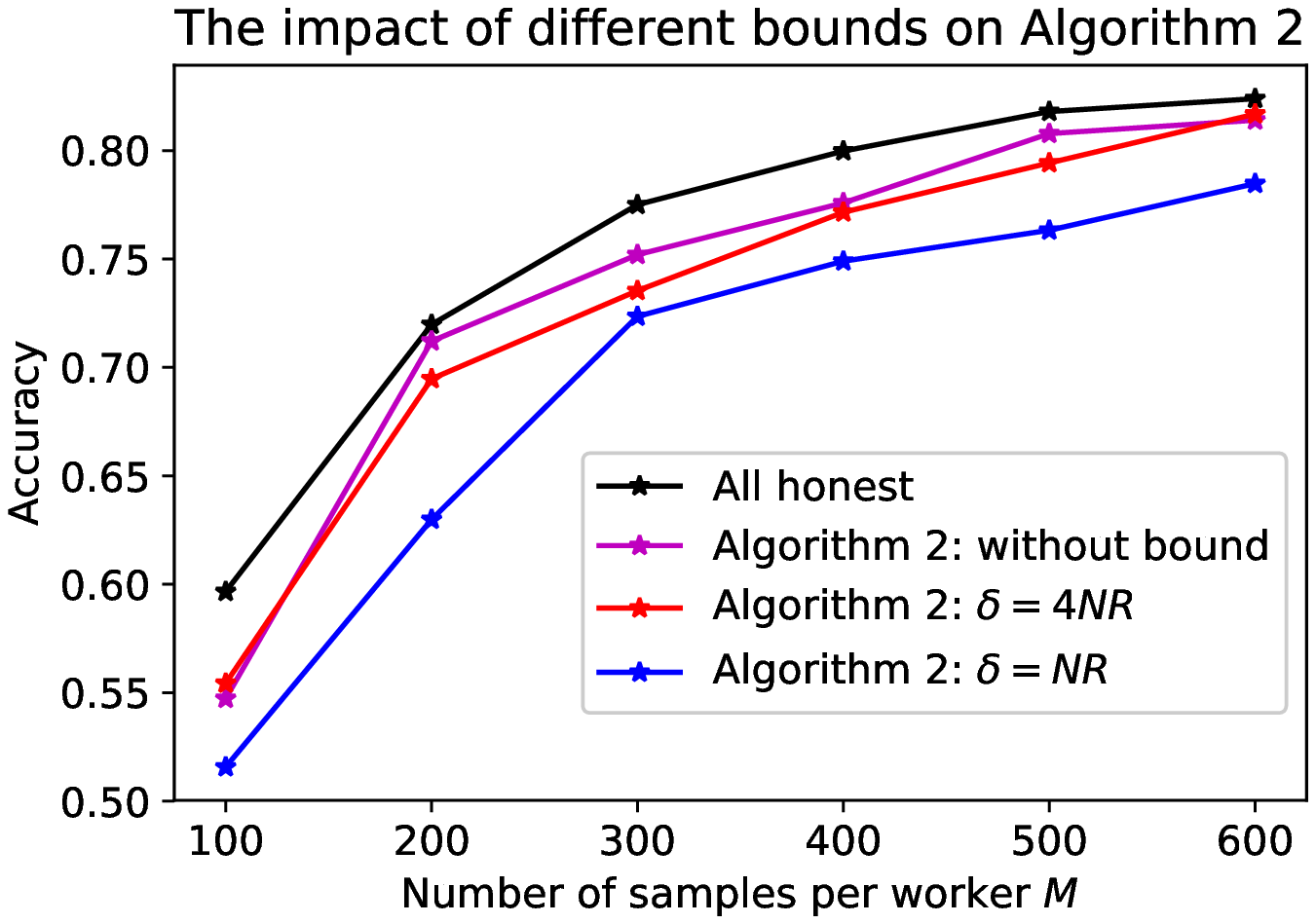}\vspace{-0.1in}
\caption{\footnotesize{The impact of the bound $\delta$ on Algorithm 2}}\vspace{-0.2in}
\label{addnoisedelta2}
\end{minipage}
\end{figure*}

\subsection{The Impact of Knowledge about $p$ on Algorithm 1}
\noindent Note that in the previous discussion, it is assumed that the exact number of Byzantine workers is known in the implementation of Algorithm \ref{algorithm1}. In practice, however, such an assumption is rarely valid. Fig. \ref{addnoisep} shows the performance of Algorithm \ref{algorithm1} under ``Random" attack when the estimated number of Byzantine workers (i.e., the $p$ used in Algorithm \ref{algorithm1}) is different from the actual one, in the case that each worker has 600 training samples. It can be seen that when the actual number of Byzantine workers exceeds the estimated one, the performance of Algorithm \ref{algorithm1} degrades quickly. On the other hand, an accurate estimated $p$ can lead to better performance. For example, Algorithm \ref{algorithm1} with an estimated $p=25$ performs better than the estimated $p=45$ counterpart when there are less than 25 Byzantine workers. We note that when the number of local training samples is large enough, assuming a large $p$ does not degrade the performance much since the workers can afford to discard useful information from some of the legit workers. However, an accurate estimate of $p$ can be essential when the workers have only a limited number of training samples.

\subsection{The Impact of Different Bound $\delta$ on Algorithm 2}\label{deltachoice}
\noindent Fig. \ref{addnoisedelta1} and Fig. \ref{addnoisedelta2} show the performance of Algorithm \ref{algorithm2} under ``Random" attack with different bound $\delta$, in the case that there are 25 and 45 Byzantine attackers respectively. In particular, $N$ is the total number of workers and $R$ satisfies $||w|| \leq R, \forall w\in\mathcal{W}$. It can be seen that a smaller bound may lead to worse performance since more useful information may be filtered out. In fact, the optimal choice of $\delta$ may depend on the specific datasets, attacks and the number of Byzantine workers and it can be computationally expensive to obtain. However, Fig. \ref{addnoisedelta1} and Fig. \ref{addnoisedelta2} show that if we remove the bound $\delta$ (or equivalently set $\delta$ to arbitrarily large), the performance of Algorithm \ref{algorithm2} is only around 2\% worse than the ``All honest" case.\footnote{Similar results can be observed for other scenarios and are omitted in the interest of space.}

\section{Related Works}\label{Related Works}
\noindent There have been many prior works on Byzantine tolerant SGD algorithms. In particular, \cite{chen2017distributed} proposes a geometric median based aggregation rule to calculate the gradient used for parameter update, given all the gradients received from the workers. In \cite{blanchard2017machine}, given the total number of workers $N$ and the number of Byzantine workers $p$, for each worker $i$ and its gradient $\nabla f^{i}$, the parameter server first selects a set $V_i$ that contains the $N-p-2$ closest gradients to $\nabla f^{i}$. Then a score $s_i$ is computed for each worker $i$, which measures how close its gradient is to the gradients in $V_i$ (i.e., $s_i=\sum_{j\in V_{i}}||\nabla f^{i}-\nabla f^{j}||^2$). Finally, the worker with the minimum score is selected and its gradient is used for parameter update. \cite{xie2018generalized} considers generalized Byzantine attackers which attack certain elements of the gradient vectors instead of the whole gradient vectors and proposes modified median based aggregation rules. \cite{yin2018byzantine} proposes coordinate-wise median and coordinate-wise trimmed mean based aggregation rules for gradient selection. \cite{alistarh2018byzantine} tries to identify the good workers by comparing their shared gradients with the medians and use the gradient information from the good workers for parameter update. However, the algorithms proposed in \cite{chen2017distributed,blanchard2017machine,xie2018generalized,yin2018byzantine,alistarh2018byzantine} become incompetent when more than half of the workers are Byzantine. In addition, synchronous settings are assumed (i.e., the workers with better computation capability have to wait for the other slower workers) in these works, which leads to waste of computation resources. \cite{damaskinos2018asynchronous} proposes an asynchronous Byzantine tolerant SGD algorithm. Particularly, it consists of a Byzantine-resilient filter and a frequency filter to determine whether a (possibly outdated) gradient should be accepted or not. However, it can only deal with up to $\frac{1}{3}$ Byzantine workers. In \cite{caodistributed}, it is assumed that the parameter server has a small portion of dataset locally, which is used to compute a noisy version of the true gradient. After receiving the gradients from the workers, the parameter server compares them with the local noisy gradients and decides to accept them if the difference is within a threshold. In this sense, the algorithm proposed in \cite{caodistributed} can deal with an arbitrary number of Byzantine workers and therefore is the most relevant one to this work. However, it still requires a parameter server to collect the gradients and therefore may be vulnerable to the single point of failure. In addition, it requirs to manually set the threshold, which depends on specific datasets. Finally, only synchronous scenarios are considered in \cite{caodistributed}.
%


\section{Conclusions and Future Works}\label{Conclusions and Future Works}
\noindent Considering that most of the Byzantine tolerant SGD algorithms in the literature are either synchronous or robust against a limited number of Byzantine workers, two asynchronous distributed Byzantine tolerant SGD algorithms that can deal with an arbitrary number of Byzantine workers are proposed in this work. The convergence analysis for both algorithms is provided and the simulation results show that the proposed algorithms work well against all types of the examined Byzantine attacks. Since the proposed algorithms only consider the current shared information to decide whether to accept them or not, considering the usage of past information for performance improvement remains our future work.

\bibliography{Ref-Richeng}
\bibliographystyle{IEEEtran} 

\end{document}